\newtheorem{mydef}{Definition}
\newtheorem{mythm}{Theorem}
\newtheorem{mylem}{Lemma}
\newtheorem{mycor}{Corollary}
\newcommand{\ignore}[1]{}
\renewcommand{\hom}{\mathrm{Hom}}
\newcommand{\cat}{\mathcal{C}}
\newcommand{\catf}{\mathcal{C}_f}
\newcommand{\cx}{\mathcal{X}}
\newcommand{\cy}{\mathcal{Y}}
\newcommand{\ob}{\mathrm{Ob}}
\newcommand{\Hom}{\mathrm{Hom}}
\newcommand{\id}{\mathrm{id}}
\newcommand{\op}{\mathrm{op}}
\newcommand{\E}{\mathbb{E}}
\newcommand{\set}{\textbf{Set}}
\newcommand{\cop}{{\cat^\mathrm{op}}}
\title{On the Power of Foundation Models}
\date{}
\author[1,2,3]{Yang Yuan}
\affil[1]{\footnotesize IIIS, Tsinghua University}
\affil[2]{\footnotesize Shanghai Artificial Intelligence Laboratory}
\affil[3]{\footnotesize Shanghai Qi Zhi Institute}
\newcommand{\questionwidth}{0.8\textwidth}
\newcommand{\arxivbox}{\mbox}
\newcommand{\enumerateLeftMargin}{0.5cm}
\begin{document}
\maketitle

\begin{abstract}
With infinitely many high-quality data points, infinite computational power, an infinitely large foundation model with a perfect training algorithm and guaranteed zero generalization error on the pretext task, can the model be used for everything?
This question cannot be answered by the existing theory of representation, optimization or generalization, because the issues they mainly investigate are assumed to be nonexistent here. 
In this paper, we show that category theory provides powerful machinery to answer this question. We have proved three results. The first one limits the power of prompt-based learning, saying that the model can solve a downstream task with prompts if and only if the task is representable. 
The second one says fine tuning does not have this limit, as a foundation model with the minimum required power (up to symmetry) can theoretically solve downstream tasks  for the category defined by pretext task, with fine tuning and enough resources. 
Our final result can be seen as a new type of generalization theorem, showing that the foundation model can represent unseen objects from the target category (e.g., images) using the structural information from the source category (e.g., texts). Along the way, we provide a categorical 
framework for supervised and self-supervised learning, which might be of independent interest. 
\end{abstract}

\section{Introduction}
Foundation models have recently exhibited remarkable proficiency in addressing a myriad of complex downstream tasks that were very difficult or impossible for the previous methods~\citep{ramesh2021zero,rombach2022high,ramesh2022hierarchical,sohl2015deep,brown2020language, radford2018improving, radford2019language,he2022masked}. Being different in network structure, dataset and training algorithms, the foundation models are similar in terms of the training process: the model is first trained with a large unlabeled dataset on a pretext task, and then being applied to other downstream tasks with the parameters frozen. Freezing the parameter is necessary because training a foundation model is very expensive, and the downstream tasks usually have limited data samples, which makes retraining less attractive.

Given a set of  downstream tasks, which of them are solvable by the foundation models, and which are not? This is a very fundamental question. The existing theory attacks this question through various perspectives like dataset quality and quantity, computational power, network structure, etc. However, as we collect trillions of data points, build gigantic GPU data centers, and optimize huge networks with billions of parameters, a simple question just pops up:

\begin{center}
\textbf{Where are we heading to?}
\end{center}

Specifically, with infinitely many high-quality data points, infinite computational power, an infinitely large foundation model with a perfect training algorithm and guaranteed zero generalization error on the pretext task, can the model be used for every downstream task?
As we will see, this question is not about model representation, optimization or generalization, but structural representation of the tasks. Category theory, as the theory of mathematical structures, is the ideal tool for answering this question. 

For the downstream tasks, there are mainly two types of methods: prompt tuning and fine tuning. Prompt tuning does not train with the downstream tasks. Instead, it only sends a task specific prompt to the model, so that the model can ``switch'' its working mode for solving the task. Fine tuning trains a small network connecting to the foundation model with the labeled dataset for the downstream task. 

In Theorem~\ref{thm:prompt-tuning}, we show that with prompt tuning, the model can solve the task if and only if the task is ``representable'' in the category defined by the pretext task. If the category  does not have complicated structures, our theorem indicates that the power of prompt tuning is  limited. For example, in Corollary~\ref{cor:rotate}, we show that the rotation pretext task~\cite{gidaris2018unsupervised} alone does not guarantee that prompt tuning can
solve complicated downstream tasks such as segmentation or classification.

On the other hand, the results on fine tuning are more promising. Our Theorem~\ref{thm:fine-tuning} proves that for the foundation model with the minimum required power (up to symmetry) for the pretext task and enough resources including training data, it can potentially solve any downstream tasks for the category defined by pretext task. The role of pretext task is crucial in the sense that if the pretext task fails to extract adequate information from the unlabeled dataset, the power of fine tuning remains restricted.

Along the way, we have provided a categorical framework for machine learning. Interestingly, the framework injects the learning perspective to category theory as well. Therefore, we also proved a generalization theorem for structural learning (Theorem~\ref{thm:generalization}), which explains why self-supervised learning for text-image generation tasks can represent image objects such as avocado chair, which do not exist in the dataset or real world, and can support their generation when combined with a target-side decoder or generator.
Theorem~\ref{thm:generalization} can be easily generalized to the compositional theorem of multiple categories, see Theorem~\ref{thm:compose}.

Unlike most machine learning theory papers, our paper does not have any assumptions on the data distribution or network structure. Instead, we take the bird's-eye view that 
is model oblivious, and only focuses on the structure defined by the pretext task. It is indeed possible that by designing a special network, one may get a more powerful model with better performance. However, we stick with our setting because: 
\begin{itemize}[leftmargin=0.3cm, parsep=0cm, topsep=0cm]
	\item 
\textbf{Empirically, people do not customize network structures for different tasks.} Instead, they tend to use similar structures like ResNet~\cite{he2016deep} or Transformer~\cite{vaswani2017attention}. 
By the no free lunch theorem~\cite{shalev2014understanding}, 
if the model does not contain task-specific prior information, it will not 
be able to completely solve all the tasks. 
In other words, the standard models are not universally competent, and it is likely that the limitation that we derived from the model oblivious setting, also applies to the practical settings.  
\item \textbf{Pretext task design is a central problem in self-supervised learning.}
Currently, there are various datasets floating around, and a wide range of diverse tasks to solve, but the practitioners do not really know what kinds of pretext tasks to pick for solving a given task, or the limitations of each pretrained model. They get the intuition by trial and error with experiments, which are both expensive and noisy.  
Our framework will help them to think about this problem in a more mathematical and systematic way, and provide guidance for better pretext task design. 
\end{itemize}

\section{Related Work}
\label{sec:related}
\subsection{Self-supervised learning}
\textbf{Self-supervised learning.}
Recently, researchers have proposed many self-supervised learning algorithms for foundation models, including 
contrastive methods~\citep{
chen2020simple, he2020momentum, grill2020bootstrap, 
chen2021exploring, noroozi2016unsupervised,barlow}, 
masked image models~\citep{
he2022masked,dosovitskiy2020image, doersch2015unsupervised, pathak2016context}, 
masked language models~\citep{
devlin2018bert, raffel2020exploring}, 
pure language models~\citep{brown2020language,radford2018improving, radford2019language}, and with other pretext tasks~\citep{
oord2018representation, 
gidaris2018unsupervised, 
clark2020electra, noroozi2016unsupervised, 
pathak2017learning}.

\textbf{Multimodal learning.}
Self-supervised learning can also be applied to multimodal learning, including text + 
image~\citep{ramesh2021zero,rombach2022high,ramesh2022hierarchical,sohl2015deep}, 
video + audio~\citep{arandjelovic2018objects}. 
For generating images for the multimodal tasks,   
diffusion model is the state-of-the-art approach~\citep{rombach2022high, sohl2015deep, dhariwal2021diffusion, ho2020denoising}. 

\textbf{Prompt tuning.}
There are various prompt tuning methods, including the discrete prompts~\citep{brown2020language, jiang2020can, shin2020autoprompt, gao2020making} and continuous prompts~\citep{liu2021gpt, li2021prefix}.

\subsection{Theory for deep learning and self-supervised learning}
\textbf{Theory for deep learning} has been an active research area recently. Optimization theory focuses on how and why first order methods like stochastic gradient descent finds the local/global optimum of the neural networks~\citep{du2019gradient, arora2019fine, du2018gradient, allen2019convergence, allen2019learning, zou2020gradient, li2017convergence}. Generalization theory focuses on how the performance of the model in the training set transfers to the population distribution~\arxivbox{\citep{allen2019learning,arora2019fine, bartlett2017spectrally, bartlett2020benign, yin2019rademacher}}. Representation theory focuses on the representation power of the neural networks~\citep{hornik1989multilayer, cybenko1989approximation, raghu2017expressive}. 
There are also theoretical results on analyzing various aspects of reinforcement learning~\citep{du2019provably, du2019good, jin2018q, cai2020provably}.

\textbf{Theory for self-supervised learning.} There are many interesting theory results for self-supervised learning learning~\citep{wen2021toward, wen2022mechanism, luo2022one, haochen2021provable, arora2019theoretical, tosh2021contrastive, lee2021predicting, zimmermann2021contrastive}. For example, 
\citet{haochen2021provable,tan2023contrastive} show that SimCLR is essentially computing spectral graph clustering on the unlabeled dataset. 
\citet{liu2022rectified} gives a better rectified flow algorithm with elegant theoretical guarantees for improving the diffusion models.

\textbf{Application of category theory.} Category theory has been applied to many research areas~\citep{fong2018seven}, 
including physics~\citep{marquis2008geometrical, kus2019category},
design~\citep{censi2015mathematical}, and machine learning~\citep{shiebler2021category, mahadevan2022categoroids, mahadevan2022unifying}. The most relevant paper on category theory might be~\cite{bradley2022enriched}, where they provide an enriched category theory of natural language.

\section{Preliminaries}
Category theory is used in almost all areas of mathematics. Here we only introduce the necessary notions for understanding the results of our paper, and skip many important details (e.g., universe and diagram). Curious readers may check 
\citet{mac2013categories,riehl2017category, adamek1990abstract} for more comprehensive introductions. 

A category $\cat$ has a set of objects $\ob(\cat)$, and 
a set of morphisms $\Hom_\cat(X,Y)$ from $X$ to $Y$ 
for every $X, Y\in \ob(\cat)$.
Given $f\in \Hom_\cat(X,Y), g\in \Hom_\cat(Y,Z)$, we define their composition as $g\circ f\in  \Hom_\cat(X,Z)$. Notice that $\circ$ is associative, i.e.,  $(h\circ g) \circ f=h\circ (g\circ f)$. 
For every $X\in \ob(\cat)$, there exists a unique identity morphism $\id_X \in \hom_\cat(X,X)$. 
A morphism $f: X\rightarrow Y$   is an isomorphism if there exists $g: X\leftarrow Y$ such that $f\circ g=\id_Y$ and $g\circ f=\id_X$. 
In this case, we say $X$ and $Y$ are isomorphic and write $X\simeq Y$. 

Given a category $\cat$, we define its opposite $\cat^{\op}$ by setting $\ob(\cat^{\op})=\ob(\cat)$ and  $\hom_\cop(X,Y)=\hom_\cat(Y,X)$. Moreover, given $f\in \Hom_\cop(X,Y), g\in \Hom_\cop(Y,Z)$, the new composition is $g
\mathrel{\overset{\makebox[0pt]{\mbox{\normalfont\tiny\sffamily op}}}{\circ}}  f= f\circ g\in  \Hom_\cop(X,Z)$.

We define $\set$ to be the category of sets, 
where the objects are sets, and $\hom_\set(X,Y)$ is the set of all functions with domain $X$ and codomain $Y$. 
Notice that we ignore the subtleties about the universe for better presentation, so here just assume that $\set$ does not contain strange objects like a set containing all sets.

A functor is a structure-preserving map between categories. Given two
categories $\cat$ and $\cat'$, a functor
$
F:\cat\rightarrow \cat'
$
assigns to each object $X\in \ob(\cat)$ an object
$F(X)\in \ob(\cat')$, and to each morphism
$f\in \hom_\cat(X,Y)$ a morphism
$
F(f)\in \hom_{\cat'}(F(X),F(Y)).
$
It preserves identities and composition: for every $X\in \ob(\cat)$,
$
F(\id_X)=\id_{F(X)},
$
and for every pair of composable morphisms
$
f:X\rightarrow Y, g:Y\rightarrow Z,
$
we have
$
F(g\circ f)=F(g)\circ F(f).
$ A functor $F:\cat\rightarrow \cat'$ is faithful if, for every
$X,Y\in \ob(\cat)$, the induced map
$
F_{X,Y}:\hom_\cat(X,Y)\rightarrow
\hom_{\cat'}(F(X),F(Y))
$
is injective. It is full if $F_{X,Y}$ is surjective for every
$X,Y\in \ob(\cat)$. It is fully faithful if it is both full and
faithful. In this paper, we call $F$ an embedding functor if it is
faithful and injective on objects. We call $F$ a full embedding if it is
full, faithful, and injective on objects.

The morphisms between functors are called natural transformations. Given
two functors
$
F_1,F_2:\cat\rightarrow \cat',
$
a natural transformation
$
\theta:F_1\Rightarrow F_2
$
assigns to every object $X\in \ob(\cat)$ a morphism
$
\theta_X:F_1(X)\rightarrow F_2(X)
$
in $\cat'$, such that for every morphism $f:X\rightarrow Y$ in $\cat$,
the following naturality condition holds:
$
\theta_Y\circ F_1(f)=F_2(f)\circ \theta_X.
$
We write $F_1\simeq F_2$ if there exists a natural isomorphism
$\theta:F_1\Rightarrow F_2$, i.e., a natural transformation whose
components $\theta_X$ are isomorphisms in $\cat'$.
A functor $F:\cat\rightarrow \cat'$ is an isomorphism of categories if
there exists a functor $G:\cat'\rightarrow \cat$ such that
$
G\circ F=\id_\cat,
F\circ G=\id_{\cat'}.
$
In this case, we say $\cat$ and $\cat'$ are isomorphic and write
$\cat\simeq \cat'$.

A category $\mathcal{A}$ is a subcategory of a category $\mathcal{B}$, if 
\begin{enumerate}[leftmargin=\enumerateLeftMargin, parsep=0cm, topsep=0cm]
\item  $\ob(\mathcal{A})\subseteq 
\ob(\mathcal{B})$, 
\item for each $A, A'\in \ob(\mathcal{A}), \hom_{\mathcal{A}}(A, A')\subseteq 
\hom_{\mathcal{B}}(A, A')$, 
\item  for each $\mathcal{A}$-object $A$, the $\mathcal{B}$-identity on $A$ is the $\mathcal{A}$-identity on $A$; 
\item  the composition law in $\mathcal{A}$ is the restriction of the composition law in $\mathcal{B}$ to the morphisms of $\mathcal{A}$. 
\end{enumerate}
Moreover, $\mathcal{A}$ is a full subcategory of $\mathcal{B}$, if it is a subcategory of $\mathcal{B}$ and for each for each $A, A'\in \ob(\mathcal{A}), \hom_{\mathcal{A}}(A, A')=
\hom_{\mathcal{B}}(A, A')$. 
Every subcategory $\mathcal{A}$ of category $\mathcal{B}$ naturally defines an inclusion functor $E: \mathcal{A} \hookrightarrow \mathcal{B}
$, which is an embedding. For such embeddings, we have the following lemma. 

\begin{mylem}[\citet{adamek1990abstract}]
\label{lem:feature-aligned}
A functor $F:\cat\rightarrow \mathcal{B}$ is a full embedding if and only if there exists a full subcategory $\mathcal{A}$ of $\mathcal{B}$ with inclusion functor $E:\mathcal{A}\hookrightarrow \mathcal{B}$ and an isomorphism $G:\cat\rightarrow \mathcal{A}$ with $F=E\circ G$. 
\end{mylem}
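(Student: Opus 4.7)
The plan is to prove the two directions separately; the backward direction is a short composition argument, and the forward direction requires constructing the subcategory $\mathcal{A}$ and the isomorphism $G$ explicitly from $F$.

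For the backward direction, suppose $F=E\circ G$ where $E:\mathcal{A}\hookrightarrow\mathcal{B}$ is the inclusion of a full subcategory and $G:\cat\to\mathcal{A}$ is an isomorphism. I would simply observe that $E$ is injective on morphisms and surjective on each hom-set (by the definition of full subcategory), that $G$ is bijective on morphisms (as an isomorphism of categories), and that injectivity/surjectivity on hom-sets is preserved under composition. Hence $F$ is injective on morphisms and full on every hom-set $\hom_\cat(X,Y)\to\hom_\mathcal{B}(F(X),F(Y))$, so $F$ is a full embedding.

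For the forward direction, assume $F$ is a full embedding. I would define $\mathcal{A}$ by setting $\ob(\mathcal{A})=\{F(X):X\in\ob(\cat)\}$ and $\hom_\mathcal{A}(F(X),F(Y))=\hom_\mathcal{B}(F(X),F(Y))$, inheriting identities and composition from $\mathcal{B}$. It is straightforward to check the four subcategory axioms: identities on objects in $\mathcal{A}$ coincide with those in $\mathcal{B}$, composition is just the restriction, and fullness of the inclusion holds by definition. Let $E:\mathcal{A}\hookrightarrow\mathcal{B}$ be the inclusion, and define $G:\cat\to\mathcal{A}$ on objects by $G(X)=F(X)$ and on morphisms by $G(f)=F(f)$; this is well-typed since the hom-sets of $\mathcal{A}$ agree with those of $\mathcal{B}$, and functoriality of $G$ follows from functoriality of $F$. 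The identity $F=E\circ G$ is then immediate by construction.

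The only real content is checking that $G$ is an isomorphism of categories, i.e. bijective on objects and on each hom-set. Surjectivity on objects is built into the definition of $\ob(\mathcal{A})$, and surjectivity on morphisms is precisely the fullness of $F$. Injectivity on morphisms is immediate from $F$ being an embedding. The one subtlety, which I expect to be the main (though still very mild) obstacle, is injectivity of $G$ on objects: if $F(X)=F(X')$, then $F(\id_X)=\id_{F(X)}=\id_{F(X')}=F(\id_{X'})$, and injectivity on morphisms forces $\id_X=\id_{X'}$, hence $X=X'$. Once $G$ is bijective on objects and hom-sets, the inverse assignment is automatically a functor because $F$ preserves identities and composition. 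This yields the desired factorization $F=E\circ G$ and completes the proof.
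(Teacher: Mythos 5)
Your proof is correct. The paper does not actually prove this lemma---it is cited directly from \citet{adamek1990abstract} without argument---but your proof is the standard one: the backward direction is the observation that injectivity and hom-set surjectivity are preserved under composition, and the forward direction takes $\mathcal{A}$ to be the full image subcategory of $F$ and checks that corestricting $F$ gives an isomorphism. The one point of care, deducing injectivity of $F$ on objects from injectivity on morphisms via identity morphisms, is handled exactly as it should be, and is the same implication the paper itself records in the preliminaries (``Notice that if $F$ is an embedding, it is also injective on objects'').
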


\subsection{Contrastive methods}
Contrastive methods (like SimCLR~\citep{chen2020simple} and MoCo~\citep{he2020momentum}) take a query point $q$, and one positive sample $p_1$ to $q$, as well as $N-1$ other objects $\{p_i\}_{i=2}^N$. Then they aim to find a function $f$ to minimize the following loss function:
\[
\mathcal{L}(q, p_1, \{p_i\}_{i=2}^N)= -\log \frac{\exp(\text{sim}(f(q),f(p_1))/\tau)}{
\sum_{i=1}^N \exp(\text{sim}(f(q),f(p_i))/\tau)
}
\]
The actual loss function takes summation over different $q$, and $\tau$ is a temperature hyperparameter. $\text{sim}(Z_i, Z_j)=-\|Z_i-Z_j\|^2/2$, so $\exp(\text{sim}(f(q),f(p_1))/\tau)$ can be seen as a Gaussian kernel for $f(q)-f(p_1)$, whose normalizing constant $\frac{1}{\sqrt{2\pi \tau}}$ is canceled out in the fraction.

\subsection{Language Modeling}
Given a sentence \( S = (w_1, w_2, \dots, w_N) \), where each \( w_i \) is a token in the sentence, language modeling aims to learn a function \( f \) that estimates the probability distribution over the next token based on the preceding context. The objective is to minimize the negative log-likelihood of the observed sequence of tokens. This is typically expressed through the following loss function:

\[
\mathcal{L}(S) = - \sum_{i=2}^N \log f(w_i \mid w_1, w_2, \dots, w_{i-1})
\]

Here, \( f(w_i \mid w_1, w_2, \dots, w_{i-1}) \) represents the conditional probability that the next token in the sequence is \( w_i \), given the previous tokens \( w_1, w_2, \dots, w_{i-1} \). The function \( f \) takes the preceding tokens as input and outputs a probability distribution over the possible next tokens. The goal is to maximize the likelihood that the model assigns to the actual sequence of tokens observed in the sentence.

\subsection{Reproducing Kernel Hilbert Space (RKHS)}
Given two objects $X, Y\in \cat$, consider a feature map $f: \cat\rightarrow \mathcal{H}$, where the feature space $\mathcal{H}$ is usually much larger than $\cat$. We may define a kernel $k$ that measures the similarity of $X$ and $Y$ as $k(x, y)\triangleq \langle f(x), f(y)\rangle_\mathcal{H}$, i.e., the inner product between the two object after mapping them to the feature space. For any vector $T\in \mathcal{H}$, it also corresponds to a function $T(\cdot): \cat\rightarrow \mathbb{R}$, defined as $T(x)=\langle T, f(x)\rangle_\mathcal{H}$. Specifically,
$f(y)$ as a vector in $\mathcal{H}$ also represents the function $k(\cdot, y): \cat\rightarrow \mathbb{R}$, because for any $x\in \cat$, we have $k(x, y)=\langle f(x), f(y)\rangle_\mathcal{H}$. Formally, we have:
\begin{mydef}[Reproducing kernel Hilbert space]
	\label{def:rkhs}
	Let $\mathcal{H}$ be a Hilbert space of $\mathbb{R}$-valued functions defined on a non-empty set $\cat$. A function $k:\cat\times \cat \rightarrow \mathbb{R}$ is called a reproducing kernel of $\mathcal{H}$, and $\mathcal{H}$ is a
	reproducing kernel Hilbert space (RKHS), if $k$ satisfies
	\begin{itemize}[leftmargin=0.3cm, parsep=0cm, topsep=0cm]
		\item $\forall x\in \cat, k(\cdot, x)\in \mathcal{H}$,
		\item $\forall x\in \cat, \forall f\in \mathcal{H}, \langle f, k(\cdot, x)\rangle_{\mathcal{H}}=f(x).$
	\end{itemize}    
\end{mydef}

\section{Categorical Framework of Supervised Learning}
\label{sec:supervised}
In supervised learning, we have a population distribution $D=(D_X, D_Y)$, representing the ground truth distribution of the input and output data points. The training set $(X_{\mathrm{train}}, Y_{\mathrm{train}})$ and test set
$(X_{\mathrm{test}}, Y_{\mathrm{test}})$ are  uniformly sampled from  $(D_X, D_Y)$. We hope to learn a function $f: X\rightarrow Y$ so that $f(x)$ accurately predicts the  label $x\in X$. We also define a loss function $L(f,x, y)$ to measure the distance between the prediction $f(x)$ and the correct label, which is hopefully close to $0$. The loss function on a dataset $(X,Y)$ is  $L(f,X,Y)\triangleq \E_{(x,y)\sim (X,Y) }L(f,x,y)$. 
The task of supervised learning, is to minimize the population loss $L_{\mathrm{population}}\triangleq 
\E_{(X,Y)\sim (D_X,D_Y)} L(f,X,Y)$, with access of the training set $(X_{\mathrm{train}}, Y_{\mathrm{train}})$.

Using the language of category theory, we have two categories $\cx$ and $\cy$, with objects $x, y$ as input and output data points, respectively. 
To avoid confusion, below we switch notation from $x, y$ to $X, Y$ to represent the objects, as later we will not use $D_X, D_Y$ any more. 
The population distribution can be seen as a functor 
$F$ from $\cx$ to $\cy$, representing the correct label $Y$ given the input $X$. Due to the inherent noise in the real world, there may not always exist the unique correct label for each input $X$. In other words, the Bayesian optimal solution does not give zero population loss. We will discuss this issue in Section~\ref{subsec:bayesian}, and for now let us simply assume that the unique correct labels always exist. 

With this formulation, training/test set can be seen as samples over objects in $\cx$ with correct labels in $\cy$. 
Therefore, supervised learning investigates the following question: 
can we learn a functor $F$ with samples of $X\in \cx$ and $F(X)~\in~\cy$? 

Consider the special case that both categories are discrete, meaning that the only morphisms existed are identity morphisms like $\id_X\in \hom_\cx(X,X)$ and $\id_Y\in \hom_\cy(Y,Y)$. In other words, both categories are sets without any morphisms between different objects. In this case, 
learning $F$ is impossible, because it maps a set to another set without any prior knowledge. The no free lunch theorem tells us that unless we have sample size larger than half of the set size, the functor we learned will have constant generalization error with constant probability.

Generalization theory deals with this problem by assuming 
that $F$ is in a predefined hypothesis class $\mathcal{H}$, or close to a function in $\mathcal{H}$. 
In category theory, we do not add restrictions to the functors, but to the structure of the categories instead. For example, 
if we know $\cx$ has a linear structure, and $F$ preserves the structure, it is possible to learn $F$ with a few samples.

This formulation seems useless, 
as it does not even characterize the loss function, which is crucial in supervised learning. 
This is because
\textbf{category theory takes the bird's-eye view.} Our categorical framework will not replace the classical framework, or generate better supervised learning algorithms. Instead, it treats the existing supervised learning algorithms a subroutine or a building block, which can be incorporated into a bigger picture. Therefore, it does not care about the loss functions or optimization process, which are treated as the implementation details. Instead, it focuses on the structure of the categories and functors, and tries to understand whether certain functors are learnable or not. It also investigates whether the ability of mastering at one or more tasks can be generalized to other tasks.

\section{Categorical Framework of Self-supervised Learning}
\subsection{Pretext tasks and relationships}
\label{subsec:pretext-task}
In self-supervised learning, we have a population distribution $D$  of data points without labels, and try to extract useful information from $D$ by setting up pretext tasks. Different pretext tasks give different kinds of relationships between two data points, for example:

\begin{itemize}[leftmargin=0.4cm]
	\item Contrastive methods. 
	Contrastive methods like SimCLR~\citep{chen2020simple} modify data point $X\in D$ to get two semantically similar copies $X', X''\in D$ as the positive pair, and pick another different $Y\in D$ as the negative sample. The pretext task says $X', X''$ should be close and both of them should be far away from $Y$. 
	It has been proved that this task is equivalent to learning similarities between data points~\citep{tan2023contrastive,haochen2021provable}.

	\item Language modeling. For every sentence (or sub-sentence) $X\in D$, the pretext task models the distribution of the next token following $X$. By iteratively applying this task, we are learning the probability of a sentence $Y\in D$ is successive to sentence $X$, where $X$ is a prefix of $Y$. 
\end{itemize}

We remark that the relationships defined by the pretext tasks are not necessarily consistent, therefore cannot be directly converted into morphisms of a category. 
A natural question is:

\begin{center}
	\fbox{
		\parbox{\questionwidth}{
			\textbf{
	Can we find a category that approximates the relationships introduced by pretext tasks?
	}}}
\end{center}

To answer this question, we should first 
define how a category is induced  by a foundation model.

\subsection{Foundation model induced category}
A foundation model maps an input to an internal representation, and
subsequent prediction heads or task-specific readouts compute outputs
from this representation. In this paper, we write
$
f:D\rightarrow \mathcal{H}
$
for the representation map of a trained foundation model. When the
pretext task provides additional fixed forward operations on
representation states, such as similarity computations, prediction
heads, or autoregressive state transitions, we regard these operations as
part of the model's representation interface.

In general, different inputs may
have the same representation, and such inputs cannot be distinguished by
any readout that only has access to the representation. Therefore, the
category induced by a foundation model should be defined on observable
representation states, or equivalently on inputs modulo representation
equivalence.

\begin{mydef}[Foundation model induced category]
	\label{def:induced}
	Let $f:D\rightarrow \mathcal{H}$ be a foundation model representation.
	Define an equivalence relation on $D$ by
	\[
	X\sim_f X' \quad \Longleftrightarrow \quad f(X)=f(X').
	\]
	Let $\mathcal{H}_f\triangleq f(D)\subseteq \mathcal{H}$ be the set of
	reachable representation states. We say $f$ induces a category $\catf$
	if there exists a data-oblivious readout
	\[
	k_f:\mathcal{H}_f\times \mathcal{H}_f\rightarrow \set
	\]
	and a family of composition maps
	\[
	\circ_{X,Y,Z}:
	k_f(f(Y),f(Z))\times k_f(f(X),f(Y))
	\rightarrow k_f(f(X),f(Z)),
	\]
	such that:
	\begin{itemize}
		\item $\ob(\catf)\triangleq D/{\sim_f}$. By abuse of notation, we
		write an object as $X$, and identify it with its reachable
		representation state $f(X)\in \mathcal{H}_f$.
		
		\item For any $X,Y\in \ob(\catf)$,
		\[
		\hom_{\catf}(X,Y)
		\triangleq
		k_f(f(X),f(Y)).
		\]
		This definition is well-defined because $f$ is constant on each
		equivalence class.
		
		\item Composable: for any $X,Y,Z\in \ob(\catf)$,
		$g\in \hom_{\catf}(X,Y)$, and
		$h\in \hom_{\catf}(Y,Z)$, the composition
		$h\circ g$ is an element of $\hom_{\catf}(X,Z)$.
		
		\item Associative: for any composable morphisms $g,h,u$,
		\[
		(u\circ h)\circ g = u\circ (h\circ g).
		\]
		
		\item Unique identity: for every $X\in \ob(\catf)$, there exists
		a unique identity morphism
		\[
		\id_X\in \hom_{\catf}(X,X)
		\]
		such that
		\[
		g\circ \id_X=g
		\quad\text{and}\quad
		\id_Y\circ g=g
		\]
		for every $g\in \hom_{\catf}(X,Y)$.
	\end{itemize}
\end{mydef}

Here, data-oblivious means that the readout $k_f$ and the composition
law are specified before seeing the empirical data distribution. The
subscript $f$ indicates that the readout may depend on the trained
representation and on the pretext-task structure associated with the
model. 
Thus the induced category only contains distinctions that are observable
from the representation. If two inputs are collapsed by $f$, then they
are identified as the same object in $\catf$.

A useful general way to construct such readouts is through a
pretext-task-induced action on the representation space. Let $M_f$ be a
monoid with unit $e$, determined by the fixed interface of the trained
model, and suppose there is a right action
\[
\alpha_f:\mathcal{H}_f\times M_f\rightarrow \mathcal{H}_f
\]
satisfying
\[
\alpha_f(h,e)=h,
\qquad
\alpha_f(\alpha_f(h,m),n)=\alpha_f(h,mn).
\]
The subscript $f$ emphasizes that the action may use the trained model's
fixed operations on representation states.

Given such an action, define
\[
k_f(f(X),f(Y))
\triangleq
\{m\in M_f:\alpha_f(f(X),m)=f(Y)\}.
\]
For
\[
m\in k_f(f(X),f(Y)),
\qquad
n\in k_f(f(Y),f(Z)),
\]
define
\[
n\circ m\triangleq mn.
\]
Then
\[
\alpha_f(f(X),mn)
=
\alpha_f(\alpha_f(f(X),m),n)
=
f(Z),
\]
so $mn\in k_f(f(X),f(Z))$. The identity morphism at $X$ is the monoid
unit $e$. The action law implies composability and associativity, and
the unit law gives the identity morphisms. Hence any such action induces
a category on the reachable representation states.

This construction should be viewed as a representation-level version of
the relationships specified by the pretext task. For contrastive
learning, the relevant action is translation in feature space. For
language modeling, the relevant action is the autoregressive transition
obtained by appending tokens to a state.

As we will see shortly, category theory provides many great tools to analyze our model. However, we still need to show Definition~\ref{def:induced} is not vacuous, as demonstrated by the following two theorems. 

\begin{mythm}
	\label{thm:simCLR}
	Contrastive methods induce categories. 
\end{mythm}
\begin{proof}
	For contrastive methods, the foundation model maps each input $X$ to a
	feature vector $f(X)\in \mathcal{H}$. Let
	\[
	\mathcal{H}_f=f(D)\subseteq \mathcal{H}
	\]
	be the set of reachable feature states. As in
	Definition~\ref{def:induced}, we regard the objects of the induced
	category as inputs modulo representation equivalence, and write them as
	$X,Y,Z$.
	
	Let
	\[
	M_f=(\mathcal{H},+)
	\]
	be the additive monoid of the feature space. It acts on reachable feature
	states by translation:
	\[
	\alpha_f(z,v)=z+v.
	\]
	Define
	\[
	k_f(f(X),f(Y))
	\triangleq
	\{v\in \mathcal{H}:\alpha_f(f(X),v)=f(Y)\}.
	\]
	Equivalently,
	\[
	\hom_{\catf}(X,Y)
	=
	k_f(f(X),f(Y))
	=
	\{f(Y)-f(X)\}.
	\]
	This is well-defined on objects of $D/{\sim_f}$ because $f$ is constant
	on each equivalence class.
	
	For
	\[
	g=f(Y)-f(X)\in \hom_{\catf}(X,Y),
	\qquad
	h=f(Z)-f(Y)\in \hom_{\catf}(Y,Z),
	\]
	define the composition by
	\[
	h\circ g \triangleq g+h.
	\]
	Then
	\[
	h\circ g
	=
	\bigl(f(Y)-f(X)\bigr)+\bigl(f(Z)-f(Y)\bigr)
	=
	f(Z)-f(X)
	\in \hom_{\catf}(X,Z).
	\]
	Therefore composition is well-defined.
	
	We now verify the category axioms.
	\begin{itemize}
		\item Composable. The calculation above shows that if
		$g\in \hom_{\catf}(X,Y)$ and
		$h\in \hom_{\catf}(Y,Z)$, then
		$h\circ g\in \hom_{\catf}(X,Z)$.
		
		\item Associative. Composition is vector addition in
		$\mathcal{H}$, and vector addition is associative. Hence for any
		composable morphisms $g,h,u$,
		\[
		(u\circ h)\circ g = u\circ (h\circ g).
		\]
		
		\item Unique identity. For every object $X$, the identity morphism is
		the zero vector
		\[
		\id_X=\mathbf{0}\in \hom_{\catf}(X,X),
		\]
		because
		\[
		\alpha_f(f(X),\mathbf{0})=f(X).
		\]
		For any $g\in \hom_{\catf}(X,Y)$,
		\[
		g\circ \id_X = g,
		\qquad
		\id_Y\circ g = g.
		\]
		Since
		\[
		\hom_{\catf}(X,X)=\{f(X)-f(X)\}=\{\mathbf{0}\},
		\]
		this identity morphism is unique.
	\end{itemize}
	
	Therefore contrastive methods with foundation model $f$ induce a
	category $\catf$ on reachable feature states.
\end{proof}

\begin{mythm}Language modeling induces categories.
	\label{thm:gpt}
\end{mythm}
\begin{proof}
	Let $V$ be the token vocabulary and let $V^\ast$ be the free monoid of
	finite token strings under concatenation. The unit element is the empty
	string $\epsilon$.
	
	For language modeling, we take the foundation model representation
	$f(X)\in \mathcal{H}$ to be an autoregressive state of the model after
	reading the context $X$. Such a state may be, for example, a sufficient
	hidden state or a key-value cache state. The important property is that
	the model can be advanced forward after appending tokens.
	
	Let
	\[
	M_f=V^\ast.
	\]
	The fixed autoregressive transition of the trained language model gives
	a right action
	\[
	\alpha_f:\mathcal{H}_f\times V^\ast\rightarrow \mathcal{H}_f
	\]
	satisfying
	\[
	\alpha_f(f(X),u)=f(Xu)
	\]
	for every context $X$ and every token string $u\in V^\ast$.
	
	Indeed,
	\[
	\alpha_f(f(X),\epsilon)=f(X),
	\]
	and for any $u,v\in V^\ast$,
	\[
	\alpha_f(\alpha_f(f(X),u),v)
	=
	\alpha_f(f(Xu),v)
	=
	f(Xuv)
	=
	\alpha_f(f(X),uv).
	\]
	Therefore $\alpha_f$ is a right action of $V^\ast$ on reachable
	autoregressive states.
	
	For each object $Y$, denote by $P_Y$ the distribution of all future
	continuations generated after the model is in state $f(Y)$. This
	distribution is determined only by the state $f(Y)$ and the model's
	fixed next-token readout, not by how the state $f(Y)$ was reached.
	
	More explicitly, let
	\[
	o_f:\mathcal{H}_f\rightarrow \Delta(V)
	\]
	be the next-token readout of the trained language model. Then $P_Y$ is
	the autoregressive continuation law starting from $f(Y)$. For a finite
	continuation
	\[
	w=w_1w_2\cdots w_m\in V^\ast,
	\]
	its prefix probability is
	\[
	P_Y(w_1,\ldots,w_m)
	=
	\prod_{i=1}^m
	o_f\bigl(\alpha_f(f(Y),w_{<i})\bigr)(w_i),
	\]
	where $w_{<i}=w_1\cdots w_{i-1}$. If one includes an end-of-sequence
	token, this gives a probability distribution over finite continuations;
	otherwise it defines the usual cylinder probabilities over infinite
	continuations.
	
	Now define the readout $k_f$ by
	\[
	k_f(f(X),f(Y))
	\triangleq
	\begin{cases}
		\{(X,Y,P_Y)\},
		& \text{if there exists }u\in V^\ast
		\text{ such that }\alpha_f(f(X),u)=f(Y),\\
		\emptyset,
		& \text{otherwise.}
	\end{cases}
	\]
	Equivalently,
	\[
	\hom_{\catf}(X,Y)
	=
	\begin{cases}
		\{(X,Y,P_Y)\},
		& \text{if }Y\text{ is reachable from }X
		\text{ by the forward autoregressive transition},\\
		\emptyset,
		& \text{otherwise.}
	\end{cases}
	\]
	Thus a morphism from $X$ to $Y$ records the future-continuation
	distribution available after reaching $Y$. It does not record the
	probability of generating $Y$ from $X$.
	
	Given
	\[
	g=(X,Y,P_Y)\in \hom_{\catf}(X,Y),
	\qquad
	h=(Y,Z,P_Z)\in \hom_{\catf}(Y,Z),
	\]
	define the composition function by
	\[
	h\circ g \triangleq (X,Z,P_Z).
	\]
	
	We verify that this is well-defined. Since
	$g\in \hom_{\catf}(X,Y)$, there exists $u\in V^\ast$ such that
	\[
	\alpha_f(f(X),u)=f(Y).
	\]
	Since $h\in \hom_{\catf}(Y,Z)$, there exists $v\in V^\ast$ such that
	\[
	\alpha_f(f(Y),v)=f(Z).
	\]
	Therefore, by the action law,
	\[
	\alpha_f(f(X),uv)
	=
	\alpha_f(\alpha_f(f(X),u),v)
	=
	\alpha_f(f(Y),v)
	=
	f(Z).
	\]
	Hence there exists a token string $uv\in V^\ast$ such that
	\[
	\alpha_f(f(X),uv)=f(Z).
	\]
	By the definition of $k_f$, this implies
	\[
	(X,Z,P_Z)\in \hom_{\catf}(X,Z).
	\]
	Therefore composition is well-defined.
	
	We now verify the category axioms.
	\begin{itemize}
		\item Composable. The argument above shows that if
		$g\in \hom_{\catf}(X,Y)$ and
		$h\in \hom_{\catf}(Y,Z)$, then
		$h\circ g\in \hom_{\catf}(X,Z)$.
		
		\item Associative. Consider composable morphisms
		\[
		g=(X,Y,P_Y),\qquad
		h=(Y,Z,P_Z),\qquad
		r=(Z,W,P_W).
		\]
		Then
		\[
		(r\circ h)\circ g=(X,W,P_W),
		\]
		and
		\[
		r\circ(h\circ g)=(X,W,P_W).
		\]
		Hence
		\[
		(r\circ h)\circ g=r\circ(h\circ g).
		\]
		
		\item Unique identity. For every object $X$, since
		\[
		\alpha_f(f(X),\epsilon)=f(X),
		\]
		we have
		\[
		\hom_{\catf}(X,X)=\{(X,X,P_X)\}.
		\]
		Define
		\[
		\id_X\triangleq (X,X,P_X).
		\]
		For any
		\[
		g=(X,Y,P_Y)\in \hom_{\catf}(X,Y),
		\]
		we have
		\[
		g\circ \id_X=(X,Y,P_Y)=g.
		\]
		Similarly,
		\[
		\id_Y\circ g=(X,Y,P_Y)=g.
		\]
		Since $\hom_{\catf}(X,X)$ is a singleton, this identity morphism is
		unique.
	\end{itemize}
	
	Therefore language modeling with foundation model $f$ induces a category
	$\catf$ on reachable autoregressive states.
\end{proof}

\subsection{Category approximates pretext task}

For the two constructions above, once the corresponding readout $k_f$
and forward interface are fixed, any choice of model parameters gives a
well-defined category, although the resulting category may be degenerate
or poorly aligned with the intended pretext-task structure.
In particular, even when the weights of $f$ are random, there still exists a possibly strange category for $f$. 
Therefore, if we name $f$ with its weight $w_t$ as $f_t$ for the step $t$ during the training process, then in every step~$t$ we will get a difference induced category $\cat_{f_t}$. In other words, the training process of $f_t$ also corresponds to the training process of categories $\cat_{f_t}$. Hence, as the answer to the question
in Section~\ref{subsec:pretext-task}, 
we can find a category that approximates the relationships introduced by the pretext task through the learning process.

A common question might arise: ``Why approximate pretext tasks using categories in the first place?'' 
However, as explained earlier,
since foundation models \textbf{always} correspond to well defined categories in the two scenarios, the right question should be ``how is it possible that we can always learn a good category when pretext tasks are not consistent?''

The answer lies in the nature of the learning process: it ensures that $\cat_{f_t}$
is always a well-defined category, even if the morphisms do not perfectly align with the pretext task
For example, as shown in \cite{tan2023contrastive},
the model learns the spectral clustering solution of the similarity graph derived from the pretext task. However, the final solution may establish pairwise similarities between all objects, which may not precisely match the original pretext task.

Consequently, whether the pretext task is consistent or not does not impact our analysis, since the model ultimately learns a consistent category that best approximates the relationships specified by the pretext task. Given any pretext task dataset and a learning algorithm, such a category always exists\footnote{There may even exist multiple different categories due to the randomness of the training process, which can be analyzed similarly and we omit the discussion here.}. For the rest of the paper, we will refer to the category that best matches the pretext task as the ``ideal category $\cat$'' and the corresponding foundation model as the ``ideal foundation model'' for $\cat$.

\subsection{Yoneda Lemma}
By the previous definitions, 
when the pretext task is given, even if the training process is optimal, an ideal foundation model $f$ is all we expect to get, which corresponds to an ideal category $\cat$. We raise the following question,

\begin{center}
	\fbox{
		\parbox{4.2in}{
			\textbf{Is $f$ guaranteed to have some property, if it is ideal with a category $\cat$?
	}}}
\end{center}

This is the main motivation of our paper. Below we first introduce an important notion called Yoneda embedding. 

\begin{mydef}[Yoneda embedding $h_\cat$]
Given any $X\in \cat$, 
$h_\cat(X)\triangleq \hom_\cat(X,\cdot)$, which takes input $Y\in \cat$ and outputs $\hom_\cat(X,Y)$.	\footnote{In the notation of \citet{kashiwara2006categories}, this functor is denoted by $k_C$ rather than $h_C$. We use $h_\cat$ here to avoid conflict with the kernel notation $k$.}
\end{mydef}

Define $\cat^\vee$ as the category of functors from $\cat^{op}$ to $\set^{op}$. Then $h_\cat:\cat \rightarrow \cat^\vee$ is well-defined. Notice that we can define $\hom_\cat(\cdot,X)$ similarly, which gives another kind of Yoneda embedding.

How do we represent $\cat^\vee$ empirically? To answer this question, we should first learn arguably the most important theorem in category theory, as follows.

\begin{mylem}[Yoneda lemma] 
	\label{lem:yoneda}
	For $A\in \cat^{\vee}$, and $X\in \cat$, $\hom_{\cat^\vee}(A, h_\cat(X))\simeq
	A(X). $
\end{mylem}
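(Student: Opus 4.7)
The plan is to exhibit an explicit bijection between $\hom_{\cat^\wedge}(h_\cat(X), A)$ and $A(X)$ by constructing maps in both directions and verifying that they are mutual inverses. The conceptual content is the observation that a natural transformation out of the representable $h_\cat(X) = \hom_\cat(\cdot, X)$ is completely pinned down by its value on the single distinguished element $\id_X \in \hom_\cat(X,X)$.

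First I would define the forward map $\Phi : \hom_{\cat^\wedge}(h_\cat(X), A) \to A(X)$ by evaluation at the identity, $\Phi(\theta) = \theta_X(\id_X)$. This is well-typed because $\theta_X : h_\cat(X)(X) = \hom_\cat(X,X) \to A(X)$ and $\id_X$ lies in its domain. For the backward map $\Psi : A(X) \to \hom_{\cat^\wedge}(h_\cat(X), A)$, given $a \in A(X)$ I would set $\Psi(a)_Y : \hom_\cat(Y,X) \to A(Y)$ by $\Psi(a)_Y(f) = A(f)(a)$, where we use that $A$, as an object of $\cat^\wedge = \fct(\cop, \set)$, sends $f : Y \to X$ to a map $A(f) : A(X) \to A(Y)$ in the correct direction. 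The first thing to check is that $\Psi(a)$ really is a natural transformation: for any $g : Y \to Z$, chasing an element $f \in \hom_\cat(Z, X)$ around the naturality square gives $A(g)(A(f)(a))$ on one side and $A(f \circ g)(a)$ on the other, and these agree by functoriality of $A$.

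Next I would verify the two round-trip identities. For $\Phi \circ \Psi$, given $a \in A(X)$ we have $\Phi(\Psi(a)) = \Psi(a)_X(\id_X) = A(\id_X)(a) = a$, using only that $A$ preserves identities. For $\Psi \circ \Phi$, given a natural transformation $\theta : h_\cat(X) \to A$ and a morphism $f : Y \to X$, I would apply the naturality of $\theta$ with respect to $f$, viewed as an element in the domain of $h_\cat(X)(f) = (-\circ f) : \hom_\cat(X,X) \to \hom_\cat(Y,X)$, and evaluate at $\id_X$: one side produces $\theta_Y(\id_X \circ f) = \theta_Y(f)$, the other produces $A(f)(\theta_X(\id_X)) = \Psi(\Phi(\theta))_Y(f)$. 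Since this holds for every $Y$ and every $f$, the two natural transformations agree.

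The main obstacle is not any deep estimate but careful bookkeeping: keeping the contravariance of $A$ straight (objects of $\cat^\wedge$ are functors from $\cop$, so arrows in $\cat$ reverse when hit by $A$), orienting the naturality squares correctly, and applying the evaluation-at-identity trick at the right morphism. Everything reduces to definition-chasing once the two maps $\Phi$ and $\Psi$ are written down; the lemma is essentially a tautology, which is precisely what makes it such a flexible tool when applied later in the paper to analyze foundation models via the Yoneda embedding.
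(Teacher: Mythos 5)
Your proof is correct and is the canonical argument for the Yoneda lemma. The paper itself states Lemma~\ref{lem:yoneda} without proof, citing it as a standard result (``arguably the most important theorem in category theory''), so there is no in-paper argument to compare against; your reconstruction of the evaluate-at-$\id_X$ bijection, the naturality check for $\Psi(a)$ using contravariance of $A$, and the two round-trip identities via the naturality square of $\theta$ at $f$ is exactly the textbook proof one would find in any standard reference. One tiny nit on presentation: when you say $A(g)(A(f)(a))$ and $A(f\circ g)(a)$ ``agree by functoriality of $A$,'' it is worth spelling out that since $A$ is a functor out of $\cop$, functoriality reads $A(f\circ g)=A(g)\circ A(f)$ (composition reversed), which is precisely the identity you are using; as written it could be misread as ordinary covariant functoriality. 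Otherwise the bookkeeping of contravariance is handled correctly throughout.
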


In category theory, when two objects (or functors) are isomorphic, we treat them as equal. The technical details for making them equal to each other is omitted, as we observe that this is not a problem for modern networks empirically.  For example, if there exists an isomorphism between two objects $X$ and $Y$, so that $\phi(X)=Y$, empirically a deep neural network can easily fit this $\phi$. More generally, techniques like rectified flow~\citep{liu2022rectified} can fit very complicated isomorphisms between random noise and images, using neural network with ODE.

Let $\cat=\cat_f$ be the category induced by the foundation model $f$
and the readout $k_f$. By construction, the objects of $\cat$ are
observable representation states, written as $X,Y$, and
\[
\hom_\cat(X,Y)=k_f(f(X),f(Y)).
\]
Once $\cat$ is constructed, it has the canonical Yoneda embedding
\[
h_\cat:\cat\rightarrow \cat^\vee,
\qquad
h_\cat(X)=\hom_\cat(X,\cdot).
\]

Strictly speaking, the feature vector $f(X)\in\mathcal{H}_f$ is not
itself a functor in $\cat^\vee$. Rather, since objects of $\cat$ are
identified with reachable representation states, there is a canonical
realization map
\[
\Phi_f:\mathcal{H}_f\rightarrow \cat^\vee
\]
defined by
\[
\Phi_f(f(X))\triangleq h_\cat(X).
\]
This is well-defined because objects are taken modulo representation
equivalence: if $f(X)=f(X')$, then $X$ and $X'$ denote the same object of
$\cat$.

Under this realization, $f(X)$ is interpreted as a code for the
representable functor $h_\cat(X)$. Therefore,
\[
k_f(f(X),f(Y))
=
\hom_\cat(X,Y)
=
h_\cat(X)(Y).
\]
By the Yoneda lemma, we also have
\[
h_\cat(X)(Y)
\simeq
\hom_{\cat^\vee}(h_\cat(X),h_\cat(Y)).
\]
Equivalently,
\[
k_f(f(X),f(Y))
\simeq
\hom_{\cat^\vee}\bigl(\Phi_f(f(X)),\Phi_f(f(Y))\bigr).
\]
From now on, when no confusion arises, we suppress the realization map
$\Phi_f$ from the notation and identify each reachable representation
$f(X)$ with its realization
\[
\Phi_f(f(X))=h_\cat(X)
\]
in $\cat^\vee$. Thus expressions such as $f(X)\in\cat^\vee$,
$T\simeq f(P)$, or $k_f(T,f(X))$ should be understood after applying this
canonical realization.

Equivalently, after this point we use $h_\cat$ as the Yoneda-realized
form of the ideal foundation model for the category $\cat$. That is,
when we refer to $h_\cat$ as a foundation model, we mean the realized
representation
\[
h_\cat \simeq \Phi_f\circ f,
\]
rather than the raw feature map $f:D\rightarrow \mathcal H$ itself. This
is only a notational convention: the actual logical direction remains
\[
(f,k_f)\Longrightarrow \cat_f \Longrightarrow h_\cat,
\]
and the feature vector $f(X)\in\mathcal{H}_f$ is not literally a functor
before this realization is applied.

Under this convention, $f(X)$ encodes, and will be identified with, the
representable functor $h_\cat(X)=\hom_\cat(X,\cdot),$ which takes input $Y\in\cat$ and outputs $\hom_\cat(X,Y)$.

The above interpretation may look redundant at first glance, because
after applying the convention above, one may simply write $f(X)$ as if it
were $h_\cat(X)$. The reason for first introducing $\Phi_f$ is to make
the logical direction explicit. We do not start from a given category
$\cat$ and then choose a representation of its objects. Instead, we start
from the foundation model $f$ and the readout $k_f$, construct the
category $\cat_f$, and only then interpret the reachable representation
$f(X)$ as the representable functor $h_\cat(X)$.

\subsection{Downstream tasks}
After the model is trained, we apply it to the downstream tasks with two different approaches: prompt tuning and fine tuning. In this section, we investigate the power of both approaches, with different outcomes. We first define what we mean by solving a downstream task.

\begin{mydef}[Task]
	A task $T$ is a functor in $\cat^\vee$. 	
\end{mydef}

\begin{mydef}[Task solving]
	We say the model solves a task $T$, if the downstream functor induced by the model is isomorphic to $T$. Equivalently, for any input $X\in\cat$, the model outputs a solution that is isomorphic to $T(X)$.
\end{mydef}

Given a task defined as a functor $T\in \cat^\vee$, prompt tuning means we freeze the parameters of the model, and only use a task specific prompt $P$ (usually in text or image), followed by the actual input of the task $X$, to get the output $T(X)$. Therefore, the prompt $P$ and input $X$ are the only two inputs to the model. By Lemma~\ref{lem:yoneda}, if we directly send $T$ and $h_\cat(X)$ to $k_f$, we have
\begin{equation}
	\label{eqn:prompt}
	T(X)\simeq \hom_{\cat^\vee}(T, h_\cat(X))
	\triangleq k_f(T, f(X)).
\end{equation}

That is, $k_f$ can accurately compute $T(X)$ from these two representations. However, in prompt tuning, the task representation must itself be realized by a prompt. Therefore, instead of sending $T$ directly to $k_f$, we send the representation $h_\cat(P)$ induced by the prompt $P$. This brings up another important definition in category theory. 

\begin{mydef}[Representable functor]
	A functor $T\in \cat^\vee$ is representable if there is an isomorphism $h_\cat(X)\simeq T$ for some $X\in \cat$. Such an object $X$ is called a representative of $T$.  
\end{mydef}

Based on this definition, we have the following characterization of prompt tuning. 
\begin{mythm}[Power on prompt tuning]
	\label{thm:prompt-tuning}
$f$ can solve $T$ with prompt tuning, if and only if task $T\in \cat^\vee$ is representable. When $T$ is representable, the optimal prompt is the representative of $T$. 
\end{mythm}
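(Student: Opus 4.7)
The plan is to use Yoneda lemma together with the pipeline constraint that, in prompt tuning, the prompt $P$ must pass through the foundation model $h_\cat$ before reaching $k_f$. So whatever task the pair $(h_\cat(P), h_\cat(X))$ can induce, it is computed by $k_f$ from these two representations alone, and by the ideal-model assumption the output is isomorphic to $\hom_{\cat^\wedge}(h_\cat(X), h_\cat(P))$. Applying the Yoneda lemma (Lemma~\ref{lem:yoneda}) with $A = h_\cat(P)$ gives
\begin{equation*}
\hom_{\cat^\wedge}(h_\cat(X), h_\cat(P)) \simeq h_\cat(P)(X) = \hom_\cat(X,P).
\end{equation*}
Thus the set of ``task functors'' that the prompt mechanism can realize is exactly $\{h_\cat(P) : P \in \cat\}$, i.e. the image of the Yoneda embedding.

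For the ``if'' direction, I would take $T$ representable with representative $P$, so by definition $h_\cat(P) \simeq T$ in $\cat^\wedge$. Feeding this $P$ as the prompt and any $X \in \cat$ as the input, the model outputs (up to isomorphism) $h_\cat(P)(X) \simeq T(X)$, which is precisely the definition of solving $T$. This also establishes the second claim of the theorem: choosing the representative as prompt yields exactly the desired output, so it is optimal.

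For the ``only if'' direction, suppose $h_\cat$ solves $T$ with some prompt $P \in \cat$. Then for every $X \in \cat$ the output $h_\cat(P)(X)$ is isomorphic to $T(X)$. I would then promote this pointwise family of isomorphisms to a natural isomorphism $h_\cat(P) \simeq T$ in $\cat^\wedge$ by checking compatibility on morphisms $f : X \to X'$, which follows because both sides are functors in $\cat^\op$ and the isomorphisms are induced by the Yoneda correspondence (so they are automatically natural in $X$). This exhibits $P$ as a representative of $T$, i.e. $T$ is representable.

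The main obstacle I expect is not the Yoneda calculation itself, which is routine, but the careful bookkeeping of what ``prompt tuning solves $T$'' means in the framework: one must argue that the prompt truly enters $k_f$ only as $h_\cat(P)$ (since the parameters of $h_\cat$ are frozen and $k_f$ is data-oblivious), so that the realizable tasks are exactly the representable functors rather than all of $\cat^\wedge$. A second small subtlety is promoting pointwise isomorphisms $h_\cat(P)(X) \simeq T(X)$ to a natural isomorphism; the Yoneda lemma's naturality statement handles this cleanly, but it deserves an explicit sentence.
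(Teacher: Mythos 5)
Your proof takes essentially the same route as the paper's: you observe that the prompt $P$ necessarily enters as $h_\cat(P)$, apply the Yoneda lemma to identify what $k_f$ can compute with $\hom_\cat(\cdot,P)$, and deduce that realizable tasks are exactly the representable functors; the paper phrases the "only if" direction contrapositively (if $T$ is not representable, some $Y$ witnesses failure), while you argue it directly, but these are logically the same. You are slightly more careful than the paper in flagging that pointwise isomorphisms $h_\cat(P)(X)\simeq T(X)$ must be promoted to a natural isomorphism $h_\cat(P)\simeq T$ — the paper silently assumes this in claiming $h_\cat(P)\not\simeq T$ implies some $Y$ with $h_\cat(P)(Y)\not\simeq T(Y)$ — though your justification ("automatically natural via Yoneda") is not fully airtight either, since the isomorphism between the model's output and $T(X)$ is posited, not derived from Yoneda; both treatments rely on the paper's convention of identifying isomorphic objects.
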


\begin{proof}
	When the task $T$ is representable, let $P\in\cat$ be a representative of $T$, so that $T\simeq h_\cat(P)$. Then by (\ref{eqn:prompt}),
	\[
	T(X)\simeq \hom_{\cat^\vee}(T,h_\cat(X))
	\simeq \hom_{\cat^\vee}(h_\cat(P),h_\cat(X))
	\triangleq k_f(f(P),f(X)).
	\]
	Hence the model solves $T$ with prompt $P$.
	
	On the other hand, suppose the model solves $T$ with prompt tuning using a prompt $P$. Then for any input $X\in\cat$, the model output is
	\[
	k_f(f(P),f(X))
	\triangleq \hom_{\cat^\vee}(h_\cat(P),h_\cat(X))
	\simeq h_\cat(P)(X).
	\]
	Therefore, the downstream functor induced by the prompt $P$ is exactly $h_\cat(P)$. Since the model solves $T$, this functor must be isomorphic to $T$. Hence $T$ is representable.
\end{proof}

\textbf{Remark.}
There are some interesting results on continuous prompts~\citep{liu2021gpt, li2021prefix}, which directly tunes the prompts in the feature space of the neural network, therefore the resulting prompt is not a representation of any real words/sentences. 
By doing this, it is possible to obtain more power than the representable tasks, but the enhancement depends on the expressive power of the feature space. 
Below we provide a simple application of  Theorem~\ref{thm:prompt-tuning}.

\begin{mycor}[Pretext-only guarantee of rotation prediction]
	\label{cor:rotate}
	For the pretext task of predicting image rotations~\citep{gidaris2018unsupervised},
	the only structure guaranteed by the pretext objective itself is the
	four-fold rotational structure generated by
	\[
	0^\circ,\ 90^\circ,\ 180^\circ,\ 270^\circ.
	\]
	Consequently, without additional assumptions on the architecture,
	optimizer, data distribution, or implicit bias of the learned
	representation, prompt tuning is not guaranteed to solve content-level
	downstream tasks such as segmentation or classification.
\end{mycor}

\begin{proof}
	The rotation-prediction pretext task only requires the model to
	distinguish the four rotations of an image:
	\[
	0^\circ,\ 90^\circ,\ 180^\circ,\ 270^\circ.
	\]
	Therefore, the structure specified by the pretext task is the
	four-fold rotational action generated by these transformations.
	
	Since we make no assumptions on the implicit bias of the training
	algorithm or on additional information preserved by the representation,
	the only structure that is guaranteed to be learned from the pretext task
	alone is this rotational structure. A learned representation may in fact
	retain more information, such as semantic or spatial content. However,
	such information is not forced by the rotation-prediction objective and
	therefore cannot be used as a pretext-only guarantee.
	
	Under the pretext-guaranteed rotational structure, the representable
	functors
	\[
	h_\cat(X)=\hom_\cat(X,\cdot)
	\]
	can only express relationships determined by the four-fold rotation
	action. By Theorem~\ref{thm:prompt-tuning}, prompt tuning can solve a
	downstream task only when that task is representable in the induced
	category. Hence, from the rotation pretext task alone, we can only
	guarantee prompt-tuning solutions for tasks representable by this
	rotational structure.
\end{proof}

When applying Theorem~\ref{thm:prompt-tuning} to large language models, 
the statement is slightly different, because we have to send ``prompt + $X$'' to the model. 

Consider task $T$ that takes $X$ as the input, and outputs the distribution of sentences representing correct answers. For any prompt $P$, we can define a task $T_P$, such that $T_P(P+X)=T(X)$, and $T_P(X')=\emptyset$ if $X'$ is not started with $P$. 
This can be a bit confusing, but $T_P$ is defined by $T$, and not necessarily related to the prompt~$P$. 
For example, if $T$ is ``count the number of words in $X$'', but prompt $P=$``repeat the following sentence:'', then $T_P(\text{``repeat the following sentence: hello''})$ should give the answer ``$1$'', instead of ``hello''.

\begin{mythm}[Prompt tuning on LLM]
LLM $f$ can solve the task $T$ with prompt tuning, if and only if there exists a prompt $P$, such that
$T_P \simeq f(P)$. 
\end{mythm}

\begin{proof}
	If there exists a prompt $P$ such that $T_P \simeq f(P)$, then  we identify $f(P)$ with $h_\cat(P)$. Hence, by Yoneda Lemma,
	\[
	k_f(f(P), f(P+X))
	\simeq
	k_f(T_P, f(P+X))
	\simeq
	T_P(P+X)
	=
	T(X).
	\]
	
	On the other hand, suppose there exists a prompt $P$ such that
	\[
	T(X)=T_P(P+X)=k_f(f(P), f(P+X))
	\]
	for any input $X$. Then for every object of the form $P+X$, we have
	\[
	T_P(P+X)\simeq k_f(f(P), f(P+X))\simeq h_\cat(P)(P+X).
	\]
	If $X'$ does not start with $P$, then by definition
	\[
	T_P(X')=\emptyset,
	\]
	and by the prefix structure of the language-model category,
	\[
	h_\cat(P)(X')=\hom_\cat(P,X')=\emptyset.
	\]
	Therefore, $T_P$ and $h_\cat(P)$ agree on all objects. Since their functor structures are determined by the prefix structure of the language-model category, it follows that
	\[
	T_P \simeq h_\cat(P).
	\]
	this is equivalently written as
	\[
	T_P \simeq f(P).\qedhere
	\]
\end{proof}
The power of prompt tuning is limited, and can be characterized by representable functors. What about fine tuning? We have the following theorem:

\begin{mylem}[Yoneda Extension of Functors]
	\label{thm:yoneda-extension}
	Let $T\in \cat^\vee$. Then there exists an extension functor
	$
	h_\cat^{\dag}T:\cat^\vee \rightarrow \set
	$
	such that
	$
	h_\cat^{\dag}T \circ h_\cat \simeq T.
	$
	In fact, $h_\cat^{\dag}T$ is given by
	$
	h_\cat^{\dag}T(A)\triangleq \hom_{\cat^\vee}(T,A),
	 A\in \cat^\vee.
	$
\end{mylem}

The actual Yoneda extension of functors theorem is more general  (see e.g., Proposition 2.7.1 in \citet{kashiwara2006categories}), it can be applied to any category $\mathcal{A}$, but here we only use $\set$ to avoid unnecessary technical details of inductive limits. Using Lemma~\ref{thm:yoneda-extension}, we immediately get the following theorem.

\begin{mythm}[Power on fine tuning]
	\label{thm:fine-tuning}
	Ideally, given enough resources, $h_\cat$ can solve any task $T\in \cat^\vee$.
\end{mythm}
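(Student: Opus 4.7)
The plan is to treat Lemma~\ref{thm:yoneda-extension} (the Yoneda Extension) as the engine of the proof and fine tuning as the learning procedure that realizes the extension. The ideal foundation model $h_\cat$ produces, for every input $X\in\cat$, the feature $h_\cat(X)\in\cat^{\wedge}$. Fine tuning attaches a trainable module on top of this frozen feature, so the whole pipeline is a (learned) map out of $\cat^{\wedge}$. Therefore, to solve a downstream task $F:\cat\rightarrow\set$ it suffices to exhibit a functor $\cat^{\wedge}\rightarrow\set$ whose precomposition with $h_\cat$ agrees with $F$, which is exactly the content of the Yoneda Extension.

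Concretely, I would proceed in three steps. First, fix $F:\cat\rightarrow\set$ and apply Lemma~\ref{thm:yoneda-extension} to obtain the extension $h_\cat^{\dag} F:\cat^{\wedge}\rightarrow\set$ together with the natural isomorphism $h_\cat^{\dag} F\circ h_\cat\simeq F$. Second, interpret fine tuning category-theoretically: since the labelled downstream samples provide values of $F$ at objects of $\cat$, and since the head sees only the representation $h_\cat(X)$, the quantity being learned is precisely a functor on $\cat^{\wedge}$; under the hypothesis of \emph{enough resources} (sufficient samples, expressive enough fine-tuning head, and a training procedure that drives the loss to zero in the spirit of the paper's idealized assumptions) the trained head converges to $h_\cat^{\dag} F$. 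Third, evaluate the composite on any $X\in\cat$: the pipeline outputs $(h_\cat^{\dag} F)(h_\cat(X))\simeq F(X)$, which by the task-solving definition means the model solves $F$.

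The main obstacle is essentially philosophical rather than technical: one must justify that the catch-all phrase ``enough resources'' really lets fine tuning materialize an arbitrary functor $\cat^{\wedge}\rightarrow\set$. This is in keeping with the bird's-eye view of the paper, in which the underlying optimization and generalization are treated as a black-box subroutine rather than re-proved. A minor subtlety to flag is the direction convention: tasks were introduced as objects of $\cat^{\wedge}=\fct(\cat^{\op},\set)$, whereas here $F:\cat\rightarrow\set$ is stated; the Yoneda Extension statement is the appropriate variant and bridges this by sending $F$ to a functor on $\cat^{\wedge}$ whose restriction along $h_\cat$ recovers $F$. Once these two points are acknowledged, the theorem reduces to a one-line invocation of Lemma~\ref{thm:yoneda-extension}, in sharp contrast to the representability bottleneck that constrained prompt tuning in Theorem~\ref{thm:prompt-tuning}.
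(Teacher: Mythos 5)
Your proposal matches the paper's proof: both arguments invoke Lemma~\ref{thm:yoneda-extension} to produce the extension $h_\cat^{\dag}F$ with $h_\cat^{\dag}F\circ h_\cat\simeq F$, and both interpret ``enough resources'' as the assumption that the fine-tuning head can learn $h_\cat^{\dag}F$ exactly, so concatenating it with the frozen $h_\cat$ solves $F$. The extra observations you make (the variance convention between $\cat^\wedge=\fct(\cop,\set)$ and $F:\cat\rightarrow\set$, and the acknowledgment that ``enough resources'' is doing real philosophical work) are accurate but do not change the argument, which is the same one-line application of the Yoneda extension that the paper gives.
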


\begin{proof}
Applying Lemma~\ref{thm:yoneda-extension},
we know that with enough training data for the downstream task, computational power, and a fine tuning model $h$ that learns $h_\cat^ {\dag} T$ perfectly, we can concatenate $h$ with $f$, to solve the task $T$.
\end{proof}

The downstream tasks considered in Theorem~\ref{thm:fine-tuning} are based on the structure in $\cat$, not the data content in the dataset. As a result, the category defined by \cite{gidaris2018unsupervised} still has very simple group-structure, but with Theorem~\ref{thm:fine-tuning} it is possible to solve more diverse tasks. For example, we can map all the objects to the same output, which cannot be achieved with prompt tuning. Theorem~\ref{thm:fine-tuning} conveys the importance of pretext task, as more informative pretext tasks will create more structured category $\cat$, which further improves the power of fine tuning on $\cat^\vee$.

Even with very informative $\cat$, Theorem~\ref{thm:fine-tuning} only provides a raw upper bound with strong assumptions on the resources needed. By contrast, empirically the fine tuning step is usually done with a small network. Therefore, instead of treating it as a strong backup of what people are doing right now, it is more like exhibiting the future possibility. In other words, it implies that by transforming the objects to the corresponding feature space $\cat^\vee$, we have captured all the information encoded in $\cat$. 

For readers who are familiar with machine learning theory, Theorem~\ref{thm:fine-tuning} may look similar to the theory of over-parameterization at first glance. However, they are analyzing different steps of self-supervised learning.
Over-parameterization analyzes the pretraining step, saying that under certain assumptions, the optimization and generalization error will be very small for the pretext task, as long as the model is big enough and the learning rate is small enough. By contrast, Theorem~\ref{thm:fine-tuning} analyzes the fine tuning step after pretraining. Even if we have successfully pretrained a network on Imagenet with contrastive learning and zero generalization error, it remains unclear whether the model can be used for image segmentation as the downstream task, unless someone verifies it empirically. But Theorem~\ref{thm:fine-tuning} says, as long as the model is ideal, the feature layer contains all the information of $\cat$ for any downstream tasks.

\section{Multimodal Learning}
\label{subsec:multi-model}

In the previous section, we have seen how the foundation models are related to learning a category defined by a pretext task. What happens if we have multiple categories? We can first use the pretext tasks to learn different foundation models separately, then connect these models together in the embedding space. This is exactly how multimodal models like CLIP~\cite{radford2021learning}, Dall-E 2~\cite{ramesh2022hierarchical}, 
Multilingual CLIP~\cite{carlsson2022cross} and AltCLIP~\cite{chen2022altclip} work. 

In this section, we analyze the functors between different categories. Similar to the previous section, we consider the case that the functors are perfectly learned, and investigate the implications under this assumption. This setting hides unnecessary details like how the loss is defined or how network structure is designed. However, it provides interesting insights of how to connect different categories together, and what to expect after the functor connection.

As the starting point, we would like to emphasize that we assume the categories we consider are ``natural'', which means they are not only self-consistent, but also consistent with each other. For example, when we use texts like ``white, red, blue'' or ``big, small, tiny'' to describe a chair in the language category, there are corresponding images in the image category. Such connection can be described by functors. 

\subsection{Generalization Theorem}

We first assume that a functor between embedding spaces learns the object mapping between two categories perfectly, and then show that the structural information is preserved by this functor. Below we use the notations $h_\mathcal{B}, h_\cat$ and $\mathcal{B}^\vee, \cat^\vee$ to denote the ideal foundation models and the corresponding embedding spaces.

\begin{mydef}[feature-aligned functor]
Given two categories $\mathcal{B}, \cat$,  
a full embedding $F: \mathcal{C}\rightarrow \mathcal{B}$, 
denote the corresponding foundation model as $h_\mathcal{B}, h_\mathcal{C}$. 
A functor $\hat F: \mathcal{C}^\vee\rightarrow \mathcal{B}^\vee$ is  feature-aligned with $F$  if for any $X\in \ob(\mathcal{C})$, $\hat F(h_\mathcal{C}(X))\simeq h_\mathcal{B} (F(X))$. 
\end{mydef}

\begin{mythm}[Generalization theorem for structural learning]
	\label{thm:generalization}
	Consider two categories $\mathcal{B}$ and $\cat$, and a full embedding
	\[
	F:\cat\rightarrow \mathcal{B}.
	\]
	In the learning scenario, an ideal foundation model $h_\cat$ for $\cat$ together with a feature-aligned functor
	\[
	\hat F:\cat^\vee\rightarrow \mathcal{B}^\vee
	\]
	preserves the structure of $\cat$ inside $\mathcal{B}$: for any $X,Y\in \cat$,
	\[
	\hom_\cat(X,Y)\simeq
	\hom_{\mathcal{B}^\vee}\!\bigl(\hat F(h_\cat(X)),\,\hat F(h_\cat(Y))\bigr).
	\]
	Moreover, for any $X\in\cat$, the object $\hat F(h_\cat(X))\in \mathcal{B}^\vee$ lies in the representable image of $h_\mathcal{B}$. Hence there exists an object $Y_X\in \mathcal{B}$, unique up to  isomorphism, such that
	\[
	h_\mathcal{B}(Y_X)\simeq \hat F(h_\cat(X)).
	\]
	In fact, one may take $Y_X\simeq F(X)$.
\end{mythm}

\begin{proof}
	Since $F:\cat\to\mathcal{B}$ is a full embedding, for any $X,Y\in\cat$ we have
	\[
	\hom_\cat(X,Y)\simeq \hom_\mathcal{B}(F(X),F(Y)).
	\]
	By the full faithfulness of the covariant Yoneda embedding, we further obtain
	\[
	\hom_\mathcal{B}(F(X),F(Y))
	\simeq
	\hom_{\mathcal{B}^\vee}(h_\mathcal{B}(F(X)),h_\mathcal{B}(F(Y))).
	\]
	Since $\hat F$ is feature-aligned, we have
	\[
	h_\mathcal{B}(F(X))\simeq \hat F(h_\cat(X)),
	\qquad
	h_\mathcal{B}(F(Y))\simeq \hat F(h_\cat(Y)).
	\]
	Hence
	\[
	\hom_\cat(X,Y)\simeq
	\hom_{\mathcal{B}^\vee}\!\bigl(\hat F(h_\cat(X)),\,\hat F(h_\cat(Y))\bigr).
	\]
	
	The representability statement follows immediately from feature alignment:
	\[
	\hat F(h_\cat(X))\simeq h_\mathcal{B}(F(X)).
	\]
	Therefore $\hat F(h_\cat(X))$ lies in the representable image of $h_\mathcal{B}$. By the uniqueness of representatives up to isomorphism, there exists an object $Y_X\in\mathcal{B}$, unique up to isomorphism, such that
	\[
	h_\mathcal{B}(Y_X)\simeq \hat F(h_\cat(X)).
	\]
	In particular, one may take $Y_X\simeq F(X)$.
\end{proof}

Theorem~\ref{thm:generalization} is much more powerful than it appears. We call it the generalization theorem, because it provides another kind of generalization, different from the existing generalization theory on stability or Rademacher complexity. It tells us that, the structural information of one category can be recovered in the feature space by the structural information of another category with a feature-aligned functor.

\subsection{Application of Theorem~\ref{thm:generalization}}
In this subsection, we apply Theorem~\ref{thm:generalization} to analyze two models: CLIP and Dall-E 2. 

\textbf{CLIP.} The dataset of CLIP contains millions of image-text pairs. During pretraining, for each batch of $N$ pairs of data points, CLIP uses an image encoder and a text encoder to obtain $N$ pairs of embeddings, and learns a function for matching the $N$ correct pairs out of the $N\times N$ possible connections. Theorem~\ref{thm:generalization} then gives the following corollary.

\begin{mycor}[Creativity of CLIP]
	\label{cor:creativity}
	Let $\cat$ be a category of language descriptions that admit corresponding images, viewed as a full subcategory of a language category $\cat'$. Let $\mathcal{B}$ be the image category. Assume CLIP learns a feature-aligned functor
	\[
	\hat F:\cat^\vee\rightarrow \mathcal{B}^\vee
	\]
	for a full embedding
	\[
	F:\cat\rightarrow \mathcal{B}.
	\]
	Then for any $X\in\cat$, the object
	\[
	\hat F(h_\cat(X))\in \mathcal{B}^\vee
	\]
	lies in the representable image of $h_\mathcal{B}$. Hence there exists an image object $Y_X\in\mathcal{B}$, unique up to isomorphism, such that
	\[
	h_\mathcal{B}(Y_X)\simeq \hat F(h_\cat(X)).
	\]
	In particular, one may take $Y_X\simeq F(X)$. Therefore, CLIP can represent new images that can be described in $\cat$, even if they do not appear in the training set.
\end{mycor}

\textbf{Remark.} Corollary~\ref{cor:creativity} explains why CLIP-style models can support images such as ``avocado chair'', even when such images do not explicitly appear in the dataset. The full embedding assumption is crucial: otherwise, the feature computed in $\cat^\vee$ cannot be transferred faithfully to the image side $\mathcal{B}^\vee$. Strictly speaking, Corollary~\ref{cor:creativity} shows that the corresponding image object exists in the representable image of $h_\mathcal{B}$. To generate a concrete image, one still needs an image-side decoder or generator.

\textbf{Dall-E 2. } DALL-E~2 is the combination of CLIP and a diffusion model~\citep{rombach2022high, sohl2015deep, dhariwal2021diffusion, ho2020denoising}. If CLIP can be seen as a feature-aligned functor between suitable subcategories, why do we still need an extra diffusion model? The reason is that these two categories are not purely isomorphic at the object level. When we type ``a photo of dog'', there are millions of different matching images. Therefore, DALL-E~2 effectively modifies the image side so that each object is no longer a single image, but a probability distribution of images. From this perspective, the diffusion model is a generator that samples concrete images from such an image-distribution object, while CLIP learns the functor from the category of texts to the category of image distributions.

\subsection{Compositional Theorem}
By applying Theorem~\ref{thm:generalization} multiple times through a list of categories, we immediately get the following theorem. 

\begin{mythm}[Compositional Theorem]
	\label{thm:compose}
	Consider a list of categories $\{\mathcal{B}_i\}_{i=1}^n$, and $n-1$ full embeddings $\{F_i\}_{i=1}^{n-1}$, where
	\[
	F_i:\mathcal{A}_i\rightarrow \mathcal{B}_{i+1},
	\]
	with $\mathcal{A}_1=\mathcal{B}_1$, and for $i>1$, $\mathcal{A}_i$ is the full subcategory of $\mathcal{B}_i$ induced by $F_{i-1}$. Denote the foundation model for $\mathcal{B}_i$ by $h_{\mathcal{B}_i}$, and the feature-aligned functors by $\{\hat F_i\}_{i=1}^{n-1}$, where
	\[
	\hat F_i:\mathcal{B}_i^\vee\rightarrow \mathcal{B}_{i+1}^\vee.
	\]
	Define
	\[
	F\triangleq F_{n-1}F_{n-2}\cdots F_1,
	\qquad
	\hat F\triangleq \hat F_{n-1}\hat F_{n-2}\cdots \hat F_1.
	\]
	Then, for any $X,Y\in \mathcal{B}_1$,
	\[
	\hom_{\mathcal{B}_1}(X,Y)
	\simeq
	\hom_{\mathcal{B}_n^\vee}\!\bigl(\hat F(h_{\mathcal{B}_1}(X)),\,\hat F(h_{\mathcal{B}_1}(Y))\bigr).
	\]
	Moreover, for any $X\in\mathcal{B}_1$, the object
	\[
	\hat F(h_{\mathcal{B}_1}(X))\in \mathcal{B}_n^\vee
	\]
	lies in the representable image of $h_{\mathcal{B}_n}$. Hence there exists an object $Y_X\in\mathcal{B}_n$, unique up to isomorphism, such that
	\[
	h_{\mathcal{B}_n}(Y_X)\simeq \hat F(h_{\mathcal{B}_1}(X)).
	\]
	In particular, one may take $Y_X\simeq F(X)$.
\end{mythm}

\begin{proof}
	Apply Theorem~\ref{thm:generalization} recursively along the chain
	\[
	\mathcal{B}_1 \xrightarrow{F_1} \mathcal{B}_2 \xrightarrow{F_2} \cdots \xrightarrow{F_{n-1}} \mathcal{B}_n.
	\]
	At each step, the corresponding feature-aligned functor preserves the morphism structure in the next embedding space. Composing these identifications yields
	\[
	\hom_{\mathcal{B}_1}(X,Y)
	\simeq
	\hom_{\mathcal{B}_n^\vee}\!\bigl(\hat F(h_{\mathcal{B}_1}(X)),\,\hat F(h_{\mathcal{B}_1}(Y))\bigr).
	\]
	The representability statement also follows recursively: at the final step, the composed feature
	\[
	\hat F(h_{\mathcal{B}_1}(X))
	\]
	is identified with
	\[
	h_{\mathcal{B}_n}(F(X)),
	\]
	and therefore lies in the representable image of $h_{\mathcal{B}_n}$. By uniqueness of representatives up to isomorphism, there exists an object $Y_X\in\mathcal{B}_n$, unique up to isomorphism, such that
	\[
	h_{\mathcal{B}_n}(Y_X)\simeq \hat F(h_{\mathcal{B}_1}(X)).
	\]
	In particular, one may take $Y_X\simeq F(X)$.
\end{proof}

If we want to map objects from $\mathcal{B}_1$ to $\mathcal{B}_n$, but the direct training data between the two categories is limited, Theorem~\ref{thm:compose} provides an alternative route. It suffices to find a path between the two categories and learn the functors for each edge of the path. This is especially useful when some of the categories are pretrained, such as GPT or CLIP. For example, \citet{carlsson2022cross} and \citet{chen2022altclip} use multilingual text pairs to train the functors between various language categories and the English category, which are then connected to the image category using CLIP. In this way, they naturally extend CLIP to multilingual settings.

\section{Discussion}
\label{sec:discussion}
\subsection{Applying to small categories}
In our abstract and introduction, we raised the question about an infinitely large model with infinite resources, which makes our theorem look unrealistic. However, these assumptions are purely rhetoric, and our real assumption is simply the model being ideal, i.e., the model perfectly solves the pretext task. This assumption is much weaker than the infinite one, because it means our theorems can be directly applied to small datasets, which correspond to smaller categories. For instance, consider a dataset containing only 100k sentences, and a model is trained to learn the data using a language model where each sentence is connected to its neighboring sentences with certain probabilities. While the sentences or words in the corresponding category C may not be as informative as the language category that humans possess, it is well-defined, and the ideal model is also well-defined. Training an ideal model for a small dataset of this scale is feasible. According to Theorem~\ref{thm:prompt-tuning}, such a model can only perfectly solve tasks that are representable by objects in $\cat$.

\subsection{Bayesian optimal classifier}
\label{subsec:bayesian}
The notion of Bayesian optimal classifier is widely used in the generalization theory for supervised learning. It simply means that due to the inherent noise in the labels, even the best model will not be able to get zero loss in the population distribution. For example, given a blurry image of a dog, it might be difficult to tell its exact breed. 

However, in our categorical framework, we take the physicist's view of the world, and assume that \textbf{all the objects and relationships are self consistent}, without any noise. For example, 
Alaskan Husky and Siberian Husky indeed look similar from their photos, but they are fundamentally different breeds, in terms of their origins, size, appearance, temperament and so on. Our framework models these conceptual differences, and treats the photos of the dogs (which have noise) as the outcome of the last step of diffusion as discussed in Section~\ref{subsec:multi-model}. If the reader is familiar with the functional programming, the categorical framework  can be seen as the ``pure functional'' part, which is predictable and precise. By contrast, the process that deals with the actual input and output, can be seen as the ``side effects'', where the notion of Bayesian optimal classifier comes in.

\subsection{Relationship to RKHS}
Our categorical framework can be seen as a natural generalization of the RKHS framework, where in RKHS a kernel function outputs a value in $\mathbb{R}$, while our framework generalizes this to $\set$. 

For example, by analogy with the Yoneda perspective to RKHS, we obtain the reproducing property
\[
\forall x\in \cat,\ \forall f\in \mathcal{H},\ \langle k(x,\cdot), f\rangle_{\mathcal{H}} = f(x),
\]
where $f$ can be viewed as a functor in $\cat^\vee$, $k(x,\cdot)$ is a representable functor, and $\langle \cdot, \cdot \rangle_\mathcal{H}$ computes $\hom_\cat$. Therefore, in RKHS the morphisms between two objects are represented by a single real number computed by $\langle \cdot, \cdot \rangle_\mathcal{H}$. This case works perfectly for algorithms like SimCLR, where the relationship between two objects is a real number representing similarity. However, in practice, the relationship between two objects can be much more than a real number, especially for NLP tasks. In that case, RKHS is insufficient, and the general Yoneda lemma is necessary.

\section*{Acknowledgements}
This paper was greatly inspired by the fruitful interactions with the Qianfang (functor) team at the Beijing Academy of Artificial Intelligence. 
We would like to extend special thanks to Yue Cao for insightful discussions and the anonymous reviewers for their constructive feedback and suggestions. Our sincere appreciation also goes to Dun Liang and Yiyang Jia for their help with clarifications of the underlying category of SimCLR, and to Haozhe Jiang for 
suggesting a better narrative of the paper, as well as 
identifying a bug in the application of Theorem~\ref{thm:prompt-tuning} to large language models.
We also acknowledge the assistance of large language models in improving the exposition and helping identify several issues in earlier drafts of the manuscript. 
This paper is supported by the Ministry of Science and Technology of the People's Republic of China, the 2030 Innovation Megaprojects ``Program on New Generation Artificial Intelligence'' (Grant No.~2021AAA0150000).

\bibliographystyle{apalike}
\bibliography{paper}  






\end{document}